\DeclareMathOperator*{\argmin}{arg\,min}
\newtheorem{theorem}{Theorem}
\newtheorem{definition}{Definition}
\title{Labeled Optimal Partitioning}
\author{
  Toby Dylan Hocking, 
  \texttt{toby.hocking@nau.edu} \\
  Anuraag Srivastava,
  \texttt{as4378@nau.edu}
}
\begin{document}

\maketitle

\begin{abstract}
  In data sequences measured over space or time, an important problem is accurate detection of abrupt changes. 
  In partially labeled data, it is important to correctly predict presence/absence of changes in positive/negative labeled regions, in both the train and test sets.
  One existing dynamic programming algorithm is designed for prediction in unlabeled test regions (and ignores the labels in the train set); another is for accurate fitting of train labels (but does not predict changepoints in unlabeled test regions).
  We resolve these issues by proposing a new optimal changepoint detection model that is guaranteed to fit the labels in the train data, and can also provide predictions of unlabeled changepoints in test data.
  We propose a new dynamic programming algorithm, Labeled Optimal Partitioning (LOPART), and we provide a formal proof that it solves the resulting non-convex optimization problem.
  We provide theoretical and empirical analysis of the time complexity of our algorithm, in terms of the number of labels and the size of the data sequence to segment.
  Finally, we provide empirical evidence that our algorithm is more accurate than the existing baselines, in terms of train and test label error.
\end{abstract}

\section{Introduction}

In the context of fields such as medical monitoring \citep{Fotoohinasab2020} and genomics \citep{HOCKING-penalties}, where data are measured over space or time, detecting abrupt changes is an important problem.
There are many different algorithms available for detecting changepoints, and in this paper we focus on algorithms that compute a solution to a well-defined mathematical optimization problem. 
For example, the classical optimal partitioning (OPART) algorithm was introduced by \citet{Jackson2005} in order to solve the penalized changepoint problem for a sequence of $N$ data $\mathbf x = [ x_1 \cdots x_N ]$. 
Inferring the most likely parameter vector $\mathbf m$ corresponds to minimizing a sum of losses $\ell$ plus a penalty $\lambda$ for each changepoint:
\begin{equation}
    \label{eq:op}
    \hat C_N = \min_{\mathbf m\in \mathbb R^N}
    \sum_{i=1}^N \ell(m_i, x_i) + 
    \lambda \sum_{i=1}^{N-1} I[m_i \neq m_{i+1}].
\end{equation}
The loss function $\ell(m, x)$ is typically the negative log likelihood of the parameter $m$ given the data $x$; smaller loss values indicate a better fit. 
The indicator function $I$ returns 1 if there is a change between positions $i$ and $i+1$, and it returns 0 otherwise. 
The penalty parameter $\lambda\geq 0$ controls the number of detected changepoints 
(small $\lambda$ results in an overfit model with too many changepoints, large $\lambda$ results in an underfit model with too few changepoints).
The penalty $\lambda$ can be selected using theoretically-motivated unsupervised criteria such as AIC/BIC \citep{Akaike73, Schwarz78, Yao88, mBIC}, or using supervised learning algorithms in the labeled data setting \citep{HOCKING-penalties}.

Although the loss $\ell$ is typically convex, the indicator functions $I$ are non-convex, so the optimization problem is non-convex, and gradient-based algorithms can not be used. 
Instead, there are efficient dynamic programming algorithms which can compute a global optimum in quadratic $O(N^2)$ time \citep{segment-neighborhood,Jackson2005}. 
More recently, functional pruning algorithms such as FPOP \citep{Maidstone2016} have been used to compute a global optimum, with time complexity that is also quadratic $O(N^2)$ in the worst case, but log-linear $O(N\log N)$ empirically. 

\begin{table}[t]
    \centering
    \begin{tabular}{ccc}
    \hline
         & Best model with $K$ segments & Best model for penalty $\lambda$  \\
         \hline
    No label constraints & Segment Neighborhood & Optimal Partitioning \\
    & \citep{segment-neighborhood} & \citep{Jackson2005} \\
    \hline
    Label constraints & SegAnnot & LOPART \\
    & \citep{Hocking2014} & \textbf{This paper}\\
    \hline
    \end{tabular}
    \caption{Relationship of the proposed LOPART algorithm to previous dynamic programming algorithms for changepoint detection (rows for constraints, columns for problem formulation). 
    The previous SegAnnot algorithm has the same label constraints but a different problem formulation which makes it impossible to predict new changepoints in unlabeled regions (may have test errors). 
    The previous Optimal Paritioning algorithm has the same problem formulation which can predict new changepoints in unlabeled regions, but ignores the given labels (may have train errors).
    }
    \label{tab:algos}
\end{table}

\paragraph{Novelty with respect to previous work.} Our paper proposes a new changepoint detection algorithm for the case of partially labeled data sequences.
The labeled data setting arises in the context of interactive systems such as SegAnnDB \citep{Hocking2014} and CpLabel \citep{Ford2020} which allow users to view the data and drag with the mouse to define regions with/without significant changepoints in subsets of the data.
The previous OPART and FPOP algorithms can be characterized as ``unsupervised'' because the labels are not used, so they may be inconsistent with the labels (train errors).
In contrast the previous SegAnnot algorithm has constraints which ensure the changepoints are consistent with the labels \citep{Hocking2014}, but there are no other changepoints outside the labels (test errors). 
In this paper we resolve both issues, resulting in our new Labeled Optimal Partitioning (LOPART) algorithm which is more accurate in terms of both train and test errors (Section~\ref{sec:accuracy}). 
The novelty of LOPART is that it combines the label constraints of SegAnnot with the penalized formulation of OPART (Table~\ref{tab:algos}).

\section{Changepoint model for labeled data}

\subsection{Labeled data setting}

In the context of changepoint detection we have a sequence of $N$ data points to segment,
$\mathbf x = [ x_1 \cdots x_N ]$, and the goal is to predict a set of positions $f(\mathbf x)\subseteq \{1, \dots, N-1\}$ with significant changepoints immediately after.
For example, we will treat the simple case of real-valued univariate data $x_1, \dots, x_N \in \mathbb R$ which occur in settings such as detection of copy number changes in cancer genomics \citep{HOCKING-penalties, Hocking2014}. In this setting we typically use the square loss $\ell(m, x)=(m-x)^2$ for a predicted mean parameter $m$ and a data value $x$. However we note that it is straightforward to generalize our algorithm to other kinds of data, by changing the loss function.

We assume the same kind of supervision/labels as were used with the previous SegAnnot algorithm \citep{Hocking2014}. We have a set of $M$ labels
$\mathcal Y=\{(\underline p_j, \overline p_j, y_j)\}_{j=1}^M$. Each label $j\in\{1,\dots, M\}$ has three attributes:
$\underline p_j\in\{1, \dots, N-1\}$ is the start of a labeled region,
$\overline p_j \in \{2, \dots, N\}$ is the end of a labeled region, and $y_j\in\{0,1\}$ is the number of changes expected in the region
$[\underline p_j, \overline p_j]$. 
Note that we could generalize the algorithm to support other kinds of $y_j$ values, but in this paper we only study 0/1 labels.
In this labeled data setting we want a changepoint prediction $f(\mathbf x)\subseteq \{1, \dots, N-1\}$ which minimizes the number of incorrectly predicted labels.

For example
$\mathcal Y=\{
(\underline p_1=1,\overline p_1=2,c_1=0),
(\underline p_2=4,\overline p_2=7,c_2=1)
\}$ 
means that there is no change after the first data point, 
there can be 0--2 changes between data points 2 and 4 (four possibilities: no changes, change after 2, change after 3, or change after both),
and there must be exactly one change somewhere
between data points 4 and 7 (three possibilities). 
A more complex example with $M=3$ labels is shown in Figure~\ref{fig:signal-cost}. 
This example shows how the labels are typically used to encode prior knowledge about the expected/desired changepoints. 
A positive label is typically used in a region with a change of low signal/noise ratio (e.g. a change in mean from 7 to 8 after position 50).
A negative label is typically used in a region with outliers (e.g. at position 86).

For the remainder of the paper we assume the
labeled regions are ordered:
\begin{equation}
  \label{eq:sorted}
  1 \leq 
\underline p_1 < \overline p_1 \leq 
\underline p_2 < \overline p_2 \leq
\cdots \leq 
\underline p_M < \overline p_M \leq 
N.
\end{equation}
If this is not the case, we can sort them in log-linear $O(M\log M)$ time using standard algorithms.
The number of possible labels is $M\in\{0, 1, \dots, N-1\}$.

\begin{figure}
    \input{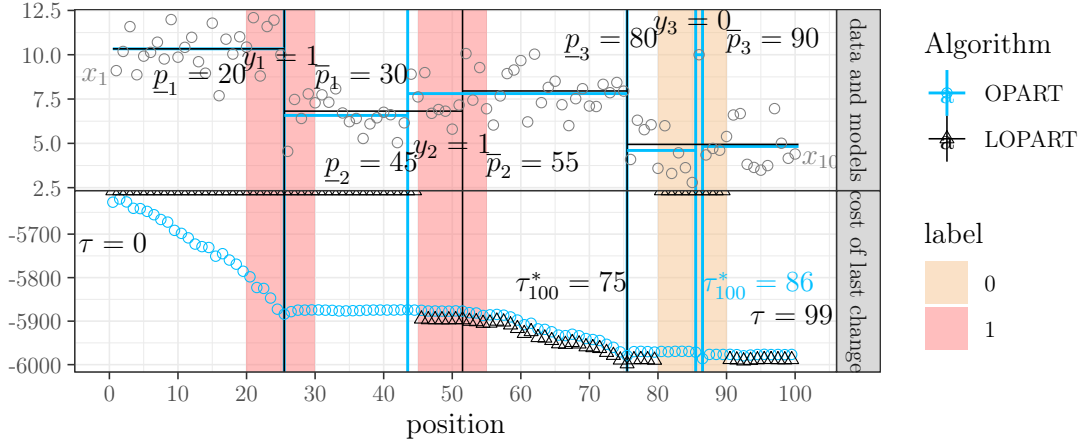}
    \vskip -0.5cm
    \caption{
    Example with $N=100$ data (grey circles) and $M=3$ labels (colored rectagles), 
    showing the novelty of the proposed LOPART algorithm (black) with respect to the classical OPART algorithm (blue). Both algorithms were run with the square loss and a penalty of $\lambda=10$.
    \textbf{Top:} LOPART changepoints (vertical lines) are consistent with 
    all three labels, whereas the OPART model is inconsistent with the second label (should have $y_2=1$ change but OPART predicts 0) and the third label (should have $y_3=0$ changes but OPART predicts 2).
    \textbf{Bottom:} when computing the optimal cost up to position $t=100$ the dynamic programming algorithms minimize over each possible last changepoint $\tau$ using (\ref{eq:op-update}) for OPART and (\ref{eq:lopart-update}) for LOPART (infeasible $\tau$ are shown with infinite cost at the top of the panel). 
    }
    \label{fig:signal-cost}
\end{figure}

\subsection{Optimization problem with label constraints}

The main new idea of our model is to add constraints to original optimal partitioning problem~(\ref{eq:op}) in order to ensure that the changepoints predicted by the model are consistent with the labels. This is similar to the idea of SegAnnot \citep{Hocking2014}, which adds constraints based on the labels to the segment neighborhood problem (Table~\ref{tab:algos}).

To determine whether or not the predicted changepoints are consistent with the given labels, we need to count the number of predicted changes in each labeled region $[\underline p_j, \overline p_j]$. To do that we define 
$$
H(\underline p, \overline p, \mathbf m) =
    \sum_{i=\underline p}^{\overline p-1}
    I[m_i \neq m_{i+1}],
$$
which counts the number of changepoints in the mean vector $\mathbf m$.
If the predicted number of changes $H(\underline p_j, \overline p_j, \mathbf m)$ is equal to the expected number of changes $y_j$, then the model $\mathbf m$ is considered to be consistent with the label $j$.
To define the optimization problem that we would like to solve, we first define an abbreviation for the cost function, which is the same as in the previous problem~(\ref{eq:op}). The cost of a mean vector $\mathbf m$ with penalty $\lambda$ from data point $\underline p$ to data point $\overline p$ is 
$$
    \mathcal C(\underline p, \overline p, \mathbf m, \mathbf x, \lambda) = 
    \sum_{i=\underline p}^{\overline p} 
    \ell(m_i, x_i) 
    +\lambda
    H(\underline p, \overline p, \mathbf m).
$$
Now we can define the labeled optimal partitioning problem using this cost function and a constraint for each label,
\begin{align}
 \min_{
  \mathbf m\in\mathbb R^{N}
  } &\ \ 
  \label{eq:labeled_problem_cost}
\mathcal C(1, N, \mathbf m, \mathbf x, \lambda)
\\
    \text{subject to} 
& \ \ \text{ for all } j\in\{1,\dots,M\},\, 
H(\underline p_j, \overline p_j, \mathbf m)=y_j.
\label{eq:labeled_problem_constraints}
\end{align}
There is one constraint per label $j\in\{1,\dots, M\}$ (\ref{eq:labeled_problem_constraints}),
and each constraint ensures that the labeled number of changes $y_j$ is predicted between $\underline p_j$ and $\overline p_j$.

\section{New Dynamic Programming Algorithm}

Our main contribution is the first algorithm which computes an optimal solution to problem~(\ref{eq:labeled_problem_cost}). In this section we first present some related sub-problems which also need to be solved, then prove the dynamic programming update rules, and finally give pseudocode for the algorithm.

\subsection{Related optimization problems}

Because our algorithm is based on ideas used to solve the optimal partitioning problem~(\ref{eq:op}), we first review that algorithm \citep{Jackson2005}. We need to compute the optimal loss given a single segment with mean parameter $\mu$ starting at $\underline p$ and ending at $\overline p$, which is 
\begin{equation}
    \label{eq:L}
L(\underline p, \overline p, \mathbf x) = 
    \min_{\mu\in\mathbb R}
    \sum_{i=\underline p}^{\overline p}
    \ell(\mu, x_i).
\end{equation}
Note that for many data types and loss functions, one optimal loss value $L(\underline p, \overline p, \mathbf x)$ can be computed in constant $O(1)$ time (e.g. with real-valued data and the square loss, given cumulative sums of the data). 
The dynamic programming algorithm recursively computes the optimal cost in terms of the last changepoint $\tau$,
\begin{equation}
  \hat C_N
  = \min_{\tau\in \{0, 1, \dots, N-1\} }
  \hat C_\tau +
  \lambda +
  L(\tau+1, N, \mathbf x).
  \label{eq:op-update}
\end{equation}
Note that for $\tau=0$, there is only one segment (no changepoints), and we let $\hat C_0=-\lambda$ so that in (\ref{eq:op-update}) we can write the optimal cost in the same way for each value of $\tau$. 
In this paper we propose an algorithm for solving (\ref{eq:labeled_problem_cost}) based on similar ideas. The novelty of our algorithm is that it also accounts for the label constraints (\ref{eq:labeled_problem_constraints}), which reduce the space of possible changepoint $\tau$ values that we need to search. 
To be clear about which constraints are involved in each sub-problem that we need to solve, we first define for any data point $t\in\{1,\dots, N\}$ the index of the last label that we need to consider when computing the cost up to that data point,
\begin{equation}
    \label{eq:J_t}
    J_t = \max \{0\} \cup \{j: \underline p_j < t\}.
\end{equation}
We can then define the cost of the model up to $t$ data points that is consistent with all of the labels up to $J_t$,
\begin{align}
 C_t = \min_{
  \mathbf m\in\mathbb R^{t}
  } &\ \ 
  \label{eq:C_t}
\mathcal C(1, t, \mathbf m, \mathbf x, \lambda)
\\
    \text{subject to} 
& \ \ \text{ for all } j \leq J_t,\, 
H(\underline p_j, \overline p_j, \mathbf m)=y_j.
\end{align}
It is clear that $C_N$ as defined by (\ref{eq:C_t}) is equivalent to the original problem we want to solve (\ref{eq:labeled_problem_cost}). 
However it does not admit a simple recursion as with $\hat C_N$ in (\ref{eq:op-update}).
For example, consider $N=3$ data with $M=1$ label that forces exactly one change $(\underline p_1=1, \overline p_1=3, y_1=1)$. 
The optimal cost can be written as
\begin{eqnarray}
  C_3
  &=& \min \begin{cases}
  L(1, 1, \mathbf x)+L(2, 3, \mathbf x) + \lambda 
  & \text{ if } \tau=1\\  
  L(1, 2, \mathbf x)+L(3, 3, \mathbf x) + \lambda
  & \text{ if } \tau=2.
  \end{cases}
  \label{eq:C_3}
\end{eqnarray}
First note that $\tau=0$ (no changepoint) is not a possibility because of the label constraint. 
Also note that if $\tau=1$ the cost can be written recursively, i.e. $C_3 = C_1 + L(2, 3, \mathbf x) + \lambda$. 
However for $\tau=2$ there is no recursion involving $C_2$, and in fact $C_2$ is not even well-defined, because the constraint is $H(1,3,\mathbf m)=I[m_1\neq m_2]+I[m_2\neq m_3]=1$ but in (\ref{eq:C_t}) there are only two optimization variables $m_1,m_2$ ($m_3$ is undefined).

To resolve this issue we need another optimization problem which ensures that there are no changes in the most recent label. 
Therefore we define another optimal cost value $V_t$ which we will compute for all $t$ that occur inside the labeled regions,
\begin{align}
 V_t = \min_{
  \mathbf m\in\mathbb R^{t}
  } &\ \ 
  \label{eq:V_t_cost}
\mathcal C(1, t, \mathbf m, \mathbf x, \lambda)
\\
    \text{subject to} 
& \ \ \text{ for all } j < J_t,\, 
H(\underline p_j, \overline p_j, \mathbf m)=y_j,
\label{eq:V_t_prev_constraints}\\
\text{and} 
& \ \ 
H(\underline p_{J_t}, t, \mathbf m)=0.
\label{eq:V_t_recent_constraint}
\end{align}
Note that this optimization problem (\ref{eq:V_t_cost}) has the same objective function as (\ref{eq:C_t}) but two kinds of constraints. The first $J_t-1$ constraints (\ref{eq:V_t_prev_constraints}) ensure that the model is consistent with all labels before label $J_t$. The last constraint (\ref{eq:V_t_recent_constraint}) ensures that there are no changes in the current label $J_t$. 
Continuing the example with $N=3$ data points above (\ref{eq:C_3}), we see that $C_3$ can be written in terms of $C_1$ and $V_2$:
\begin{eqnarray}
  C_3
  &=& \min \begin{cases}
  C_1 +L(2, 3, \mathbf x) + \lambda 
  & \text{ if } \tau=1\\  
  V_2 +L(3, 3, \mathbf x) + \lambda
  & \text{ if } \tau=2.
  \end{cases}
  \label{eq:C_3_recursive}
\end{eqnarray}
This example shows that in this case, to compute the final optimal cost $C_N$, we need to compute either $C_t$ or $V_t$ for each $t<N$.
In the next section we prove that this logic can be used for any set of labeled data.

\subsection{Dynamic programming update rules}

To state the rules of the dynamic programming algorithm, we first need to define the optimal cost that we will result in a recursion. For any data point $t\in\{1, \dots, N\}$ we define the optimal cost to be
\begin{equation}
    W_t = \begin{cases}
    V_t & \text{ if } t\in \{\underline p_j + 1, \dots, \overline p_j - 1\} \text{ for some label } j \\
    C_t & \text{ otherwise.}
    \end{cases}
\end{equation}
This definition uses the $V_t$ cost (current label constrained to have no changes) for data points $t$ inside labels, and the standard cost $C_t$ otherwise (model must be consistent with all previous labels). In particular it is clear that $W_N=C_N$ is equivalent to the optimal cost of all $N$ data that we would like to compute (\ref{eq:labeled_problem_cost}).
We also need to define the set of previous changepoints that we will search over to compute the optimal cost. We define this set recursively, starting with $T_0=\{\}$ (empty set) and then for any $t\in\{1, \dots, N\}$:
\begin{equation}
    T_t = \begin{cases}
    T_{t-1} & \text{ if } 
    \exists j: y_j=0 \text{ and } 
    t\in \{\underline p_j + 1, \dots, \overline p_j \}\\
    T_{t-1} & \text{ if } 
    \exists j: y_j=1 \text{ and } 
    t\in \{\underline p_j + 1, \dots, \overline p_j - 1\}\\
    \{\underline p_j, \dots, t-1\} & \text{ if } 
    \exists j: y_j=1 \text{ and } 
    t= \overline p_j \\
    T_{t-1}\cup \{t-1\} & \text{ otherwise.} 
    \end{cases}
    \label{eq:T_t}
\end{equation}
The first two cases of this definition require no changepoints to be added to the set, for all data points $t$ that are inside a label (or at the end of a negative $y_j=0$ label). 
The third case only applies to data points $t$ that occur at the end of a positive $y_j=1$ label, and reinitializes the set to positions within that label. 
The final case is used for data points $t$ in unlabeled regions (or at the start of a label), and adds one possible changepoint at the previous data point $t-1$.
We can now give the following definition for the dynamic progamming update rules.
\begin{definition}[Dynamic programming algorithm for labeled optimal partitioning]
The cost is initialized $\tilde W_0 = -\lambda$ and dynamic programming updates can be computed for any $t\in\{1, \dots, N\}$ via
\begin{equation}
    \tilde W_t = \min_{\tau \in T_t} \tilde W_\tau + \lambda + L(\tau+1, t, \mathbf x).
    \label{eq:lopart-update}
\end{equation}
\end{definition}
Note the similarity with the update rules for the unconstrained problem~(\ref{eq:op-update}). 
In fact the only difference is optimization of the last changepoint $\tau$ over $T_t$ rather than $\{0, 1, \dots, t-1\}$. 
If there are no labels, then $\tilde W_t=\hat C_t$ and $T_t=\{0, 1, \dots, t-1\}$ for all $t$, so the unconstrained (\ref{eq:op-update}) and constrained (\ref{eq:lopart-update}) dynamic programming update rules are identical in this case. 
In general, we have the following theorem which proves the optimality of the recursive dynamic progamming update rules.
\begin{theorem}
The recursively computed cost $\tilde W_t$ is equal to the optimal cost $W_t$ for all data points $t\in\{1, \dots, N\}$.
\end{theorem}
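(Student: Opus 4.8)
The plan is to argue by induction on $t$, reducing the theorem to a Bellman-type optimality principle for the \emph{optimal} cost. Concretely, I would show that $W_t$ itself obeys the same recursion as $\tilde W_t$, namely
\[
W_t = \min_{\tau \in T_t} \left[\, W_\tau + \lambda + L(\tau+1, t, \mathbf x) \,\right],
\]
with the convention $W_0 = -\lambda$ extending the definition to $t=0$. Once this is established the theorem is immediate: taking as inductive hypothesis $\tilde W_\tau = W_\tau$ for all $\tau < t$ (with base case $\tilde W_0 = -\lambda = W_0$), substituting into the update rule~(\ref{eq:lopart-update}) and comparing with the displayed recursion gives $\tilde W_t = W_t$. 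The base case $t=1$ is direct, since no label has $\underline p_j < 1$ and $1$ lies strictly inside no label, so $W_1 = C_1 = L(1,1,\mathbf x)$, while $T_1 = \{0\}$ yields $\tilde W_1 = \tilde W_0 + \lambda + L(1,1,\mathbf x) = L(1,1,\mathbf x)$.

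To prove the optimality principle I would establish the two inequalities separately. For ``$\leq$'', fix any $\tau \in T_t$, take a minimizer $\mathbf m^{(\tau)}$ of the problem defining $W_\tau$, and extend it to $[1,t]$ by appending a single constant segment on $[\tau+1,t]$ with the loss-minimizing mean; the cost of this extended vector equals $W_\tau + \lambda + L(\tau+1,t,\mathbf x)$, so it remains only to check that the extension is \emph{feasible} for the problem defining $W_t$, which is exactly what the construction of $T_t$ in~(\ref{eq:T_t}) guarantees case by case. For ``$\geq$'', take a minimizer $\mathbf m^\star$ of the problem defining $W_t$, let $\tau^\star$ be the position of its last changepoint (or $\tau^\star=0$ if there is none), and split its cost as (cost of the prefix on $[1,\tau^\star]$)$\,+\lambda+ L(\tau^\star+1,t,\mathbf x)$. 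The two facts to verify are that $\tau^\star \in T_t$ and that the restriction $\mathbf m^\star$ to $[1,\tau^\star]$ is feasible for the problem defining $W_{\tau^\star}$; granting these, the prefix cost is at least $W_{\tau^\star}$, giving $W_t \geq \min_{\tau \in T_t}[\,\cdots\,]$.

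Both inequalities reduce to a case analysis on the role of $t$, matching the four cases of~(\ref{eq:T_t}): $t$ strictly inside a label, where $W_t = V_t$ and the ``no change yet'' constraint forces the last changepoint to lie before position $\underline p_j$ (so the positions already present in $T_{t-1}=T_t$ suffice); $t$ at the end of a negative label, where $C_t = V_t$ because both impose $H(\underline p_j,\overline p_j,\mathbf m)=0$, and the same reasoning applies; $t$ at the end of a positive label; and $t$ in an unlabeled region or at a label's start, which is the standard OPART step where $t-1$ is adjoined to $T_t$. The delicate transition, which I expect to be the main obstacle, is the positive-label end $t=\overline p_j,\ y_j=1$: here the constraint $H(\underline p_j,\overline p_j,\mathbf m)=1$ forces the last changepoint of any feasible $\mathbf m^\star$ to lie in $\{\underline p_j,\dots,\overline p_j-1\}$, exactly the reinitialized set of the third case of~(\ref{eq:T_t}), and one must further check that the prefix then carries no change inside $[\underline p_j,\tau^\star]$, so that it is feasible for $V_{\tau^\star}$.

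Throughout, the book-keeping that ties the argument together is the alignment of the change-count $H(\underline p,\overline p,\mathbf m)=\sum_{i=\underline p}^{\overline p-1} I[m_i\neq m_{i+1}]$ with the interval endpoints: a changepoint placed immediately before $\underline p_j$ (i.e.\ between $\underline p_j-1$ and $\underline p_j$) is \emph{not} counted against label $j$, whereas one placed at $\underline p_j$ or later is. Verifying this index accounting in each case is what makes the definitions of $J_t$, $V_t$, $C_t$, and $T_t$ fit together, and it is the step where the proof must be carried out most carefully.
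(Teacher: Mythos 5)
Your proposal is correct and takes essentially the same route as the paper: induction on $t$ with base case $t=1$, where the inductive step amounts to showing that the optimal cost itself satisfies the restricted recursion $W_t = \min_{\tau\in T_t} W_\tau + \lambda + L(\tau+1,t,\mathbf x)$, verified by a case analysis on the position of $t$ relative to the labels. The only difference is presentational: the paper compresses your two-inequality exchange argument into a direct characterization of the set of feasible last changepoints (shown to equal $T_t$ in each case), whereas you spell out the prefix-feasibility and extension-feasibility checks explicitly---including the delicate positive-label-end case $t=\overline p_j$, $y_j=1$ that the paper's Case~1/Case~2 split handles implicitly.
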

\begin{proof}
The proof is by induction. 
The base case is $t=1$ for which the set of changepoints is $T_1=\{0\}$ and the recursive cost is $\tilde W_1=\tilde W_0 + \lambda + L(1, 1, \mathbf x)  =C_1=W_1$.

Now for any $t\in\{2,\dots, N\}$, we assume that for all $\tau<t$ we have $\tilde W_\tau=W_\tau$ (induction hypothesis), and we aim to prove that $\tilde W_t = W_t$.
We proceed by considering the different cases that are possible. 
\paragraph{Case 1: inside a labeled region.} 
We assume $t\in \{ \underline p_j+1, \dots, \overline p_j-1\}$ for some label $j=J_t$, so $W_t=V_t$ is the optimal cost subject to no changes from $\underline p_j$ to $t$, i.e. $H(\underline p_j, t, \mathbf m)=0$ from (\ref{eq:V_t_recent_constraint}) which implies a upper bound on the last changepoint $\tau<\underline p_j$. 
If there are no previous positive labels then the set of possible last changes is $\{0, \dots, \underline p_j - 1\}\setminus \mathcal A^0$ where $A^0=\cup_{k:y_k=0}\{\underline p_k, \dots, \overline p_k-1\}$ is the set of all negative labeled regions. 
If there is at least one previous positive label $k$ then the set of possible last changes is $\{\underline p_{k}, \dots, \underline p_j - 1\}$. 
In both cases the set of possible changes is equal to the recursively defined set $T_t$. 
For any $\tau\in T_t$ we have $W_\tau=V_\tau$ if $\tau\in\{ \underline p_j+1,\dots, \overline p_j-1\}$ for some label $j$, and $W_\tau=C_\tau$ otherwise.
Therefore the optimal cost can be written as $W_t=V_t=\min_{\tau\in T_t} W_\tau + \lambda + L(\tau+1, t, \mathbf x)$, which by the induction hypothesis equals $\min_{\tau\in T_t} \tilde W_\tau + \lambda + L(\tau+1, t, \mathbf x) = \tilde W_t$.
\paragraph{Case 2: outside a labeled region.}
We assume $t\not \in \{ \underline p_j+1, \dots, \overline p_j-1\}$ for any label $j$, so $W_t=C_t$ is the optimal cost subject to all previous labels.
If there are no previous positive labels then the set of possible last changes is $\{0, \dots, t-1\}\setminus \mathcal A^0$.
If there is at least one previous positive label $k$ then the set of possible last changes is $\{\underline p_{k}, \dots, t - 1\}\setminus \mathcal A^0$.
In both cases this set of possible last changes is equal to the recursively defined set $T_t$. Therefore, using an argument analogous to case 1, the optimal cost is $W_t=C_t=\tilde W_t$, which completes the proof of optimality of the recursive update rules.
\end{proof}

\subsection{Pseudocode, implementation, complexity}


Algorithm~\ref{algo:lopart} (LOPART) inputs a data vector $\mathbf x$, a non-negative penalty parameter $\lambda$, and a set of $M$ labels which are assumed to be sorted in increasing order (line~\ref{line:inputs}). 
On line~\ref{line:init} the algorithm initializes the cost $W_0$ and possible changepoints $T_0$. 
The for loop on line~\ref{line:for} implements the dynamic programming for all data points $t$ from 1 to $N$. 
Since any changepoint $\tau\in\mathcal A^0$ (in a negative label) never appears in any set $T_t$ (\ref{eq:T_t}), we can further optimize the algorithm by running the dynamic programming computations of $T_t,W_t,\tau_t^*$ for $t\not \in A^0$ (outside of negative labels).
Line~\ref{line:T_update} updates the set of possible changepoints using (\ref{eq:T_t}). 
Line~\ref{line:W_update} implements update rule~(\ref{eq:lopart-update}), storing the optimal cost in $W_t$ and the optimal last changepoint in $\tau^*_t$. 
Overall the algorithm is similar to the original optimal partitioning algorithm, but with a more complex update rule on line~\ref{line:T_update} (which exploits the structure of the labels).

\begin{algorithm2e}[H]
\SetAlgoLined
 \caption{Labeled Optimal Partitioning (LOPART)}\label{algo:lopart}
Input: Data $\mathbf x\in\mathbb R^N$, 
penalty $\lambda\in[0, \infty)$, 
labels $\mathbf y\in\{0,1\}^M$, 
positions $\mathbf{\underline p}, \mathbf{\overline p}$ such that
  $1 \leq 
\underline p_1 < \overline p_1 \leq 
\cdots \leq 
\underline p_M < \overline p_M \leq N $
\; \label{line:inputs}
Initialization: $W_0 \gets -\lambda$, $T_0=\{\}$ \; \label{line:init} 
Dynamic progamming: \For{$t=1$ to $N$}{ \label{line:for} 
$T_t\gets \textsc{update}(T_{t-1}, \mathbf y, \mathbf{\underline p}, \mathbf{\overline p})$ // using (\ref{eq:T_t}) \; \label{line:T_update}
$W_t,\tau^*_t \gets \min, \argmin_{\tau\in T_t} W_\tau + \lambda + L(\tau+1, t, \mathbf x)$\; \label{line:W_update}
}
\end{algorithm2e}

\paragraph{Example and comparison with classical OPART.} Consider the example with $N=100$ data and $M=3$ labels shown in Figure~\ref{fig:signal-cost}. 
The classical OPART algorithm ignores the labels, so at $t=100$ it computes the optimal cost by minimizing over all possible last changepoints $\tau\in\{0,\dots, 99\}$, and finds that $\tau^*_{100}=86$ is optimal. 
This changepoint from the outlier $x_{86}=10.0$ is in a negative $y_3=0$ label so the resulting model is inconsistent with this label.
It is also inconsistent with the second positive $y_2=1$ label because the model predicts no changepoints between $\underline p_2=45$ and $\overline p_2=55$.
In constrast the LOPART algorithm computes the optimal cost by minimizing over the constrained set of changepoints $T_{100}=\{45, \dots, 79, 90, \dots, 99\}$ and finds that $\tau^*_{100}=75$ is optimal. 
The resulting model has changepoints that are consistent with all of the labels.

\paragraph{Computational complexity.}
Computing each $T_t$ update on line~\ref{line:T_update} is amortized constant $O(1)$ time on average, but linear $O(N)$ time in the worst case (for $t=\overline p_j=N$ when there is a single positive label $j$ spanning the entire data sequence). 
Computing each minimization on line~\ref{line:W_update} is takes $O(|T_t|)$ time, which is $O(N)$ in the worst case (for $t=N$ when there are no labels). 
The total number of operations over all iterations of the for loop is $\sum_{t=1}^N |T_t|$ which is $O(N)$ in the best case (labels covering the entire data sequence) and $O(N^2)$ in the worst case (no labels).
The space complexity of the algorithm is $O(N)$.

\paragraph{Implementation details.} 
Overall the algorithm can be efficiently implemented in standard C using arrays.
The set $T_t$ can be implemented using an array of size $N$. 
Only the most recent set $T_t$ must be stored (previous sets $T_\tau$ for $\tau<t$ can be discarded), and only the first $|T_t|$ elements of the array are used.
Optimal cost $W$ and last changepoint $\tau^*$ vectors can also be implemented using arrays.
Optimal segment mean parameters, e.g. $\mu$ in (\ref{eq:L}) from solving $L(\tau+1,t,\mathbf x)$, can be computed and stored during the dynamic programming for loop at no extra computational complexity.
As in the original optimal partitioning algorithm, the overall optimal changepoints can be computed by examining the values of the $\tau^*$ vector starting with $\tau^*_N$. 

\paragraph{Implementation for infinite penalty.} LOPART defines a path of optimal models. At one extreme with penalty $\lambda=0$ we have changes in all unlabeled regions. The model at the other extreme has a change in each positive label, and no changes elsewhere. 
This model can be computed when the user inputs infinite penalty $\lambda=\infty$, which can be treated as a special case. First we create a new set of labels, by keeping only the positive labels, and putting negative labels elsewhere. Then we run Algorithm~\ref{algo:lopart} with no penalty $\lambda=0$ to get the optimal changepoints (the optimal cost is infinite).

\paragraph{Previous algorithms which can be used to solve special cases.} In the trivial case of $M=0$ labels, the LOPART optimization problem~(\ref{eq:labeled_problem_cost}) is the same as the classic optimal partitioning problem~(\ref{eq:op}) which can be solved by the OPART algorithm \citep{Jackson2005}.
Also, when we take an infinite penalty, $\lambda=\infty$, then there are no predicted changes outside of positive labels, and the resulting model can be computed by the SegAnnot algorithm \citep{Hocking2014}.

\begin{figure}
\begin{tikzpicture}[x=1pt,y=1pt]
\definecolor{fillColor}{RGB}{255,255,255}
\path[use as bounding box,fill=fillColor,fill opacity=0.00] (0,0) rectangle (180.67,144.54);
\begin{scope}
\path[clip] (  0.00,  0.00) rectangle (180.67,144.54);
\definecolor{drawColor}{RGB}{255,255,255}
\definecolor{fillColor}{RGB}{255,255,255}

\path[draw=drawColor,line width= 0.6pt,line join=round,line cap=round,fill=fillColor] (  0.00,  0.00) rectangle (180.68,144.54);
\end{scope}
\begin{scope}
\path[clip] ( 38.23, 30.69) rectangle (175.17,121.88);
\definecolor{fillColor}{RGB}{255,255,255}

\path[fill=fillColor] ( 38.23, 30.69) rectangle (175.18,121.88);
\definecolor{drawColor}{gray}{0.92}

\path[draw=drawColor,line width= 0.3pt,line join=round] ( 38.23, 61.19) --
	(175.17, 61.19);

\path[draw=drawColor,line width= 0.3pt,line join=round] ( 38.23, 98.89) --
	(175.17, 98.89);

\path[draw=drawColor,line width= 0.3pt,line join=round] ( 60.40, 30.69) --
	( 60.40,121.88);

\path[draw=drawColor,line width= 0.3pt,line join=round] ( 92.30, 30.69) --
	( 92.30,121.88);

\path[draw=drawColor,line width= 0.3pt,line join=round] (124.20, 30.69) --
	(124.20,121.88);

\path[draw=drawColor,line width= 0.3pt,line join=round] (156.09, 30.69) --
	(156.09,121.88);

\path[draw=drawColor,line width= 0.3pt,line join=round] (172.04, 30.69) --
	(172.04,121.88);

\path[draw=drawColor,line width= 0.6pt,line join=round] ( 38.23, 42.35) --
	(175.17, 42.35);

\path[draw=drawColor,line width= 0.6pt,line join=round] ( 38.23, 80.04) --
	(175.17, 80.04);

\path[draw=drawColor,line width= 0.6pt,line join=round] ( 38.23,117.74) --
	(175.17,117.74);

\path[draw=drawColor,line width= 0.6pt,line join=round] ( 44.45, 30.69) --
	( 44.45,121.88);

\path[draw=drawColor,line width= 0.6pt,line join=round] ( 76.35, 30.69) --
	( 76.35,121.88);

\path[draw=drawColor,line width= 0.6pt,line join=round] (108.25, 30.69) --
	(108.25,121.88);

\path[draw=drawColor,line width= 0.6pt,line join=round] (140.14, 30.69) --
	(140.14,121.88);
\definecolor{drawColor}{RGB}{190,190,190}

\path[draw=drawColor,line width= 0.6pt,line join=round] ( 38.23,117.74) -- (175.17,117.74);
\definecolor{drawColor}{gray}{0.50}

\node[text=drawColor,anchor=base,inner sep=0pt, outer sep=0pt, scale=  0.71] at ( 76.35,110.68) {1 second};
\definecolor{fillColor}{RGB}{255,0,0}

\path[fill=fillColor,fill opacity=0.50] ( 44.45, 38.50) --
	( 59.67, 37.53) --
	( 66.75, 36.12) --
	( 76.35, 36.84) --
	( 87.27, 36.94) --
	( 97.79, 35.78) --
	(108.25, 36.84) --
	(118.92, 35.79) --
	(129.54, 36.62) --
	(140.14, 37.67) --
	(140.14, 36.42) --
	(129.54, 35.88) --
	(118.92, 35.39) --
	(108.25, 35.58) --
	( 97.79, 34.83) --
	( 87.27, 35.78) --
	( 76.35, 35.52) --
	( 66.75, 35.04) --
	( 59.67, 35.46) --
	( 44.45, 36.49) --
	cycle;

\path[] ( 44.45, 38.50) --
	( 59.67, 37.53) --
	( 66.75, 36.12) --
	( 76.35, 36.84) --
	( 87.27, 36.94) --
	( 97.79, 35.78) --
	(108.25, 36.84) --
	(118.92, 35.79) --
	(129.54, 36.62) --
	(140.14, 37.67);

\path[] (140.14, 36.42) --
	(129.54, 35.88) --
	(118.92, 35.39) --
	(108.25, 35.58) --
	( 97.79, 34.83) --
	( 87.27, 35.78) --
	( 76.35, 35.52) --
	( 66.75, 35.04) --
	( 59.67, 35.46) --
	( 44.45, 36.49);
\definecolor{fillColor}{RGB}{0,0,0}

\path[fill=fillColor,fill opacity=0.50] ( 44.45,108.68) --
	( 59.67,108.51) --
	( 66.75,108.55) --
	( 76.35,108.25) --
	( 87.27,107.86) --
	( 97.79,107.07) --
	(108.25,105.27) --
	(118.92,100.83) --
	(129.54, 88.78) --
	(140.14, 40.05) --
	(140.14, 38.02) --
	(129.54, 88.68) --
	(118.92,100.66) --
	(108.25,105.21) --
	( 97.79,106.97) --
	( 87.27,107.80) --
	( 76.35,108.20) --
	( 66.75,108.41) --
	( 59.67,108.39) --
	( 44.45,108.51) --
	cycle;

\path[] ( 44.45,108.68) --
	( 59.67,108.51) --
	( 66.75,108.55) --
	( 76.35,108.25) --
	( 87.27,107.86) --
	( 97.79,107.07) --
	(108.25,105.27) --
	(118.92,100.83) --
	(129.54, 88.78) --
	(140.14, 40.05);

\path[] (140.14, 38.02) --
	(129.54, 88.68) --
	(118.92,100.66) --
	(108.25,105.21) --
	( 97.79,106.97) --
	( 87.27,107.80) --
	( 76.35,108.20) --
	( 66.75,108.41) --
	( 59.67,108.39) --
	( 44.45,108.51);
\definecolor{fillColor}{RGB}{0,191,255}

\path[fill=fillColor,fill opacity=0.50] ( 44.45,109.02) --
	( 59.67,108.67) --
	( 66.75,108.74) --
	( 76.35,108.55) --
	( 87.27,108.53) --
	( 97.79,108.55) --
	(108.25,108.51) --
	(118.92,108.49) --
	(129.54,108.48) --
	(140.14,108.51) --
	(140.14,108.47) --
	(129.54,108.43) --
	(118.92,108.47) --
	(108.25,108.43) --
	( 97.79,108.44) --
	( 87.27,108.45) --
	( 76.35,108.50) --
	( 66.75,108.48) --
	( 59.67,108.47) --
	( 44.45,108.83) --
	cycle;

\path[] ( 44.45,109.02) --
	( 59.67,108.67) --
	( 66.75,108.74) --
	( 76.35,108.55) --
	( 87.27,108.53) --
	( 97.79,108.55) --
	(108.25,108.51) --
	(118.92,108.49) --
	(129.54,108.48) --
	(140.14,108.51);

\path[] (140.14,108.47) --
	(129.54,108.43) --
	(118.92,108.47) --
	(108.25,108.43) --
	( 97.79,108.44) --
	( 87.27,108.45) --
	( 76.35,108.50) --
	( 66.75,108.48) --
	( 59.67,108.47) --
	( 44.45,108.83);
\definecolor{drawColor}{RGB}{255,0,0}

\path[draw=drawColor,line width= 0.6pt,line join=round] ( 44.45, 37.77) --
	( 59.67, 35.59) --
	( 66.75, 35.67) --
	( 76.35, 35.95) --
	( 87.27, 36.51) --
	( 97.79, 34.86) --
	(108.25, 35.94) --
	(118.92, 35.63) --
	(129.54, 36.42) --
	(140.14, 37.01);
\definecolor{drawColor}{RGB}{0,0,0}

\path[draw=drawColor,line width= 0.6pt,line join=round] ( 44.45,108.54) --
	( 59.67,108.47) --
	( 66.75,108.46) --
	( 76.35,108.25) --
	( 87.27,107.85) --
	( 97.79,107.06) --
	(108.25,105.23) --
	(118.92,100.74) --
	(129.54, 88.68) --
	(140.14, 38.93);
\definecolor{drawColor}{RGB}{0,191,255}

\path[draw=drawColor,line width= 0.6pt,line join=round] ( 44.45,108.99) --
	( 59.67,108.57) --
	( 66.75,108.51) --
	( 76.35,108.51) --
	( 87.27,108.50) --
	( 97.79,108.51) --
	(108.25,108.43) --
	(118.92,108.49) --
	(129.54,108.46) --
	(140.14,108.51);
\end{scope}
\begin{scope}
\path[clip] ( 38.23, 30.69) rectangle (175.17,121.88);
\definecolor{drawColor}{RGB}{0,0,0}
\definecolor{fillColor}{RGB}{255,0,0}

\path[draw=drawColor,line width= 0.4pt,line join=round,line cap=round,fill=fillColor] (140.14, 37.01) --
	(142.99, 39.36) --
	(165.94, 39.36) --
	(165.94, 30.69) --
	(142.99, 30.69) --
	cycle;
\definecolor{fillColor}{RGB}{0,0,0}

\path[draw=drawColor,line width= 0.4pt,line join=round,line cap=round,fill=fillColor] (140.14, 38.93) --
	(142.99, 47.60) --
	(175.18, 47.60) --
	(175.18, 39.36) --
	(142.99, 39.36) --
	cycle;
\definecolor{fillColor}{RGB}{0,191,255}

\path[draw=drawColor,line width= 0.4pt,line join=round,line cap=round,fill=fillColor] (140.14,108.51) --
	(142.99,112.84) --
	(171.76,112.84) --
	(171.76,104.17) --
	(142.99,104.17) --
	cycle;
\definecolor{drawColor}{RGB}{255,255,255}

\node[text=drawColor,anchor=base west,inner sep=0pt, outer sep=0pt, scale=  0.70] at (142.99, 32.13) {FPOP};

\node[text=drawColor,anchor=base west,inner sep=0pt, outer sep=0pt, scale=  0.66] at (142.99, 40.74) {LOPART};

\node[text=drawColor,anchor=base west,inner sep=0pt, outer sep=0pt, scale=  0.70] at (142.99,105.61) {OPART};
\definecolor{drawColor}{gray}{0.20}

\path[draw=drawColor,line width= 0.6pt,line join=round,line cap=round] ( 38.23, 30.69) rectangle (175.18,121.88);
\end{scope}
\begin{scope}
\path[clip] (  0.00,  0.00) rectangle (180.67,144.54);
\definecolor{drawColor}{gray}{0.30}

\node[text=drawColor,anchor=base east,inner sep=0pt, outer sep=0pt, scale=  0.73] at ( 33.28, 39.32) {0.01};

\node[text=drawColor,anchor=base east,inner sep=0pt, outer sep=0pt, scale=  0.73] at ( 33.28, 77.01) {0.10};

\node[text=drawColor,anchor=base east,inner sep=0pt, outer sep=0pt, scale=  0.73] at ( 33.28,114.71) {1.00};
\end{scope}
\begin{scope}
\path[clip] (  0.00,  0.00) rectangle (180.67,144.54);
\definecolor{drawColor}{gray}{0.20}

\path[draw=drawColor,line width= 0.6pt,line join=round] ( 35.48, 42.35) --
	( 38.23, 42.35);

\path[draw=drawColor,line width= 0.6pt,line join=round] ( 35.48, 80.04) --
	( 38.23, 80.04);

\path[draw=drawColor,line width= 0.6pt,line join=round] ( 35.48,117.74) --
	( 38.23,117.74);
\end{scope}
\begin{scope}
\path[clip] (  0.00,  0.00) rectangle (180.67,144.54);
\definecolor{drawColor}{gray}{0.20}

\path[draw=drawColor,line width= 0.6pt,line join=round] ( 44.45, 27.94) --
	( 44.45, 30.69);

\path[draw=drawColor,line width= 0.6pt,line join=round] ( 76.35, 27.94) --
	( 76.35, 30.69);

\path[draw=drawColor,line width= 0.6pt,line join=round] (108.25, 27.94) --
	(108.25, 30.69);

\path[draw=drawColor,line width= 0.6pt,line join=round] (140.14, 27.94) --
	(140.14, 30.69);
\end{scope}
\begin{scope}
\path[clip] (  0.00,  0.00) rectangle (180.67,144.54);
\definecolor{drawColor}{gray}{0.30}

\node[text=drawColor,anchor=base,inner sep=0pt, outer sep=0pt, scale=  0.73] at ( 44.45, 19.68) {1};

\node[text=drawColor,anchor=base,inner sep=0pt, outer sep=0pt, scale=  0.73] at ( 76.35, 19.68) {10};

\node[text=drawColor,anchor=base,inner sep=0pt, outer sep=0pt, scale=  0.73] at (108.25, 19.68) {100};

\node[text=drawColor,anchor=base,inner sep=0pt, outer sep=0pt, scale=  0.73] at (140.14, 19.68) {1000};
\end{scope}
\begin{scope}
\path[clip] (  0.00,  0.00) rectangle (180.67,144.54);
\definecolor{drawColor}{RGB}{0,0,0}

\node[text=drawColor,anchor=base,inner sep=0pt, outer sep=0pt, scale=  0.92] at (106.70,  7.64) {Number of labels $M$};
\end{scope}
\begin{scope}
\path[clip] (  0.00,  0.00) rectangle (180.67,144.54);
\definecolor{drawColor}{RGB}{0,0,0}

\node[text=drawColor,rotate= 90.00,anchor=base,inner sep=0pt, outer sep=0pt, scale=  0.92] at ( 13.08, 76.28) {seconds};
\end{scope}
\begin{scope}
\path[clip] (  0.00,  0.00) rectangle (180.67,144.54);
\definecolor{drawColor}{RGB}{0,0,0}

\node[text=drawColor,anchor=base west,inner sep=0pt, outer sep=0pt, scale=  1.10] at ( 38.23,129.95) {$N=10,000$ data};
\end{scope}
\end{tikzpicture}
    \input{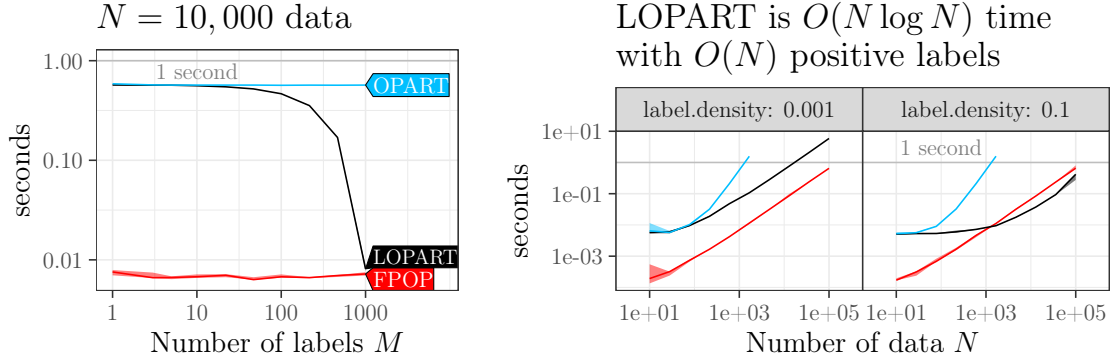}
    \vskip -0.5cm
    \caption{Empirical time complexity in simulated data sets (median line and quartile band computed over several data sets of a given size). 
    \textbf{Left:} with $N=10^5$ data, LOPART takes the same amount of time as OPART for a small number of labels, and the same amount of time as FPOP for a large number of labels.
    \textbf{Right:} When there are $O(N)$ positive labels LOPART takes $O(N\log N)$ time (same as FPOP). Larger label density ($M/N$) reduces constant factors.
    }
    \label{fig:timings}
\end{figure}


\section{Empirical results}

\subsection{Empirical time complexity in simulated data sets}

As discussed in the previous section, the theoretical/expected time complexity of LOPART is $O(N)$ in the best case (all data labeled) and $O(N^2)$ in the worst case (no data labeled).
To verify this empirically, we conducted timings experiments with simulated data sequences using the standard normal distribution (no changes in mean, but these simulations are only to evaluate time complexity, so they should be representative of real data as well because our algorithm depends only on the number/type of labels, not the data distribution). 
The CPU we used was a 2.40GHz Intel(R) Core(TM)2 Duo CPU P8600.
For baselines we considered the original OPART algorithm which is quadratic $O(N^2)$ time \citep{Jackson2005}, 
and the log-linear $O(N\log N)$ time FPOP algorithm \citep{Maidstone2016}. 
Both baselines compute an optimal solution to the changepoint problem~(\ref{eq:op}) with penalty $\lambda$ and no label constraints. 

In the first experiment, we fixed the data set size at $N=10^5$ (using random normal data as explained in the previous paragraph) and used a variable number of positive labels $M\in\{1, \dots, 1000\}$, each of size 9, every 10 data points. As expected, we observed LOPART timings similar to OPART when the number of labels is small, and timings similar to FPOP when the number of labels is large  (Figure~\ref{fig:timings}, left).
In the second experiment, we fixed the label density $M/N=0.001$ (one positive label per 1000 data points) and varied the number of random normal data $N$. 
In this case LOPART is log-linear $O(N\log N)$ time (same as FPOP) and for $N\geq 1000$ data it showed substantial speedups over the quadratic $O(N^2)$ time OPART (Figure~\ref{fig:timings}, middle).
In the third experiment, we fixed the label density $M/N=0.1$ and varied the number of random normal data $N$. 
As expected with many labels, LOPART is much faster, and in fact faster than FPOP (by constant factors) for $N\geq 1000$ data (Figure~\ref{fig:timings}, right).
Overall these experiments show that LOPART is at least as fast as OPART, and can be substantially faster when there are many labels.

\subsection{Empirical accuracy with respect to labels in real genomic data}
\label{sec:accuracy}
\paragraph{Data sets.} To examine the changepoint prediction accuracy of LOPART, we performed the following experiments in real genomic data. 
For baseline algorithms we considered OPART \citep{Jackson2005} and SegAnnot \citep{Hocking2014}.
Genomic scientists created labels for 413 data sequences from cancer DNA copy number profiles using the SegAnnDB system \citep{Hocking2014}.  
In these data there are separate sequences for each patient and chromosome; abrupt changes in a sequence are important diagnostic markers for aggressive cancer subtypes \citep{gudrun-jclinicaloncology}.
The number of data points per sequence ranges from $N=39$ to 43628, and the number of labels ranges from $M=2$ to 12 (with at least one positive and one negative label per sequence).

\paragraph{Evaluation metrics.} The main evaluation metric that we use is the total number of label errors, which is the sum of false positives and false negatives over all labels $j$ in the train/test sets.
A false positive is a label $j$ such that $H(\underline p_j, \overline p_j, \mathbf m) > y_j$ (more predicted changes than expected for either a positive or negative label),
a false negative is $H(\underline p_j, \overline p_j, \mathbf m) = 0 < y_j=1$ (no predicted changes for a positive label), and
a true positive is $H(\underline p_j, \overline p_j, \mathbf m) \geq y_j=1$ (one or more predicted changes for a positive label).
We also perform Receiver Operating Characteristic (ROC) analysis, which examines the True Positive Rate as a function of the False Positive Rate (different points on the ROC curve are computed using different penalty $\lambda$ values).


\begin{figure}
    \centering
    \includegraphics[width=0.37\textwidth]{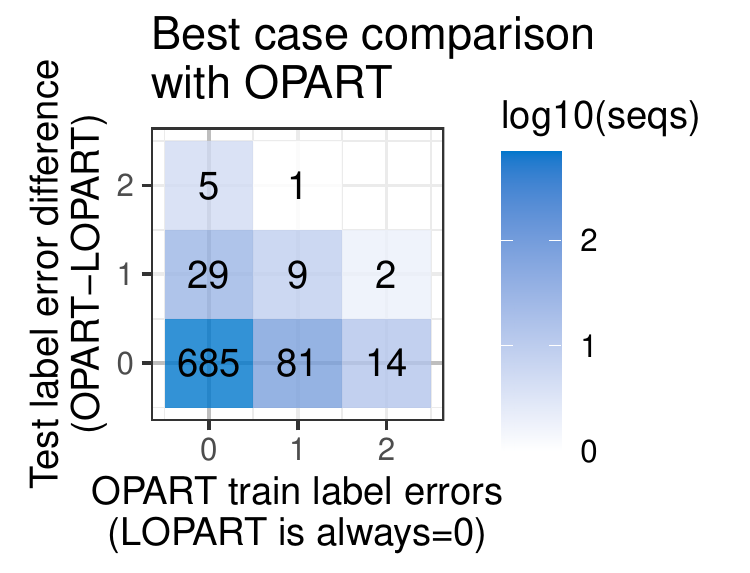}
    \includegraphics[width=0.6\textwidth]{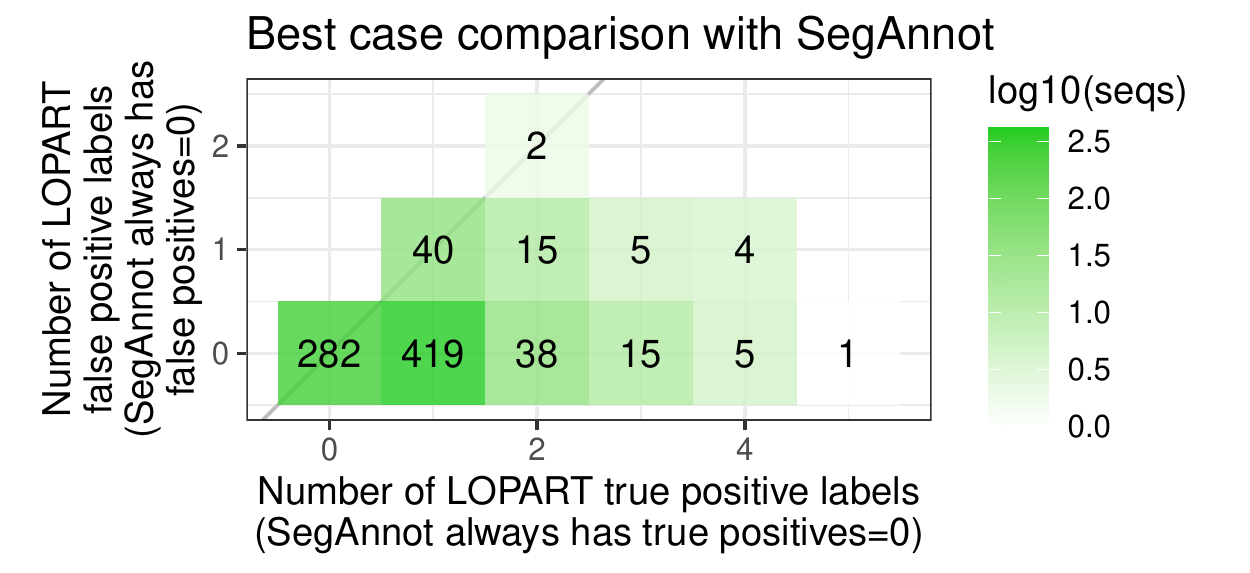}
    \caption{Comparing LOPART with baselines in terms of best case label errors in 2-fold cross-validation on real genomic data (penalty $\lambda$ selected for each algorithm/sequence/split by minimizing the total label errors, train+test). 
    \textbf{Left:} LOPART never has more label errors than OPART (grey horizontal line indicates equal test label errors, vertical line indicates equal train label errors).
    \textbf{Right:} LOPART never has more test errors than SegAnnot (grey diagonal line indicates equal test errors for LOPART and SegAnnot). 
    }
    \label{fig:label-errors}
\end{figure}

\paragraph{Cross-validation setup.} For each data sequence we first randomly assigned each label to a fold ID, and used $K=2$ fold cross-validation to obtain two train/test splits per sequence (each train/test set has at least one label per sequence).
We also tried sequential rather than random assignment (first half of labels on each data sequence are fold 1, second half are fold 2), and we observed qualitatively similar results (same ranking of algorithms), so we report only the results for random assignment below.

\paragraph{Grid of penalty values.} For each data sequence and train/test split we ran LOPART (using only the labels in the train set) and OPART, both with a grid of 21 penalty values evenly spaced on the log scale,  $\lambda\in\{10^{-5}, 10^{-4.5}, \dots, 10^5\}$.

\paragraph{Best penalty analysis.} 
The goal of this analysis is to determine label error differences between algorithms in the best case for each algorithm (i.e. when the penalty is properly chosen).
For each split/sequence/algorithm we chose a penalty which minimized the total number of label errors (train+test), and then we analyzed the train/test error differences between algorithms (Figure~\ref{fig:label-errors}).

\paragraph{Best penalty comparison with OPART/FPOP.} 
Since LOPART has zero train label errors by definition, we expected OPART to have more errors in some cases, even after optimizing over penalty values.
We observed that the best OPART model had 0 train label errors in $719/826=87\%$ of sequences/splits (counts on vertical grey line in Figure~\ref{fig:label-errors}, left), but 1--2 train label errors in $107/826=13\%$ of sequences/splits (counts right of vertical grey line in Figure~\ref{fig:label-errors}, left).
We also compared the number of test label errors per algorithm, after optimizing over penalty values. 
We observed that LOPART had the same number of test label errors in $780/826=94\%$ of sequences/splits (counts on horizontal grey line in Figure~\ref{fig:label-errors}, left), and 1--2 fewer test label errors in $46/826=6\%$ of sequences/splits (counts above horizontal grey line in Figure~\ref{fig:label-errors}, left).
We did not observe any data sets or splits for which LOPART had more train or test label errors than OPART (after optimizing over penalty values).
These data indicate that after optimizing over penalty values LOPART is always at least as accurate as OPART in these real data, and LOPART is sometimes more accurate. These conclusions also hold for FPOP, because it computes the same optimal solution as OPART.



\paragraph{Best penalty comparison with SegAnnot.} LOPART and SegAnnot both have constraints that ensure zero label errors with respect to the train set, so we compared them by computing the number of label errors with respect to the test set (after optimizing over penalty $\lambda$ values for LOPART; SegAnnot is equivalent to always taking penalty $\lambda=\infty$ in LOPART).
SegAnnot never predicts any changes in unlabeled regions, so it always has zero false positives and maximal false negatives with respect to the test labels.
We expected LOPART to have decreased label error rates due to decreased false negative rates (it can predict changepoints in unlabeled regions).
In $324/826=39\%$ of sequences/splits LOPART and SegAnnot had the same number of test errors (counts on diagonal grey line in Figure~\ref{fig:label-errors}, right).
In $502/826=61\%$ of sequences/splits LOPART had fewer test errors than SegAnnot (more true positives than false positives, counts below diagonal grey line in Figure~\ref{fig:label-errors}, left).
We did not observe any sequences/splits for which LOPART had more test label errors than SegAnnot.
Overall these data indicate that LOPART with best penalty is always as accurate as SegAnnot, and frequently more accurate in these real genomic data sets.

\paragraph{Predicted penalty analysis.} 
The main goal of this analysis is to determine the extent to which a penalty learned using OPART can be used for prediction with LOPART.
The LOPART algorithm has no train label errors for any penalty $\lambda$, because it uses the train labels in the definition of its optimization problem~(\ref{eq:labeled_problem_cost}).
To choose the penalty $\lambda$ to use with LOPART, we propose to learn a penalty using OPART (which does not use the train labels in its optimization problem, so it may have train label errors). 
To do this we first run OPART for several penalty values, then we compute label error rates for each penalty/sequence/split.
We then use three different methods for learning/predicting the penalty $\lambda$ to use for each test data sequence:
\begin{description}
\item[BIC.0] uses the classical Bayesian Information Criterion of \citet{Schwarz78}, which means predicting $\lambda_i=\log N_i$ for each data sequence $i$, where $N_i$ is the number of data points to segment (this is unsupervised since it ignores the labels; 0 learned parameters).
\item[constant.1] uses grid search to choose a penalty value with minimal train label errors, then predicts this constant $\lambda$ for each test data sequence (this is supervised since it uses the labels; 1 learned parameter).
\item[linear.2] uses the linear penalty function learning algorithm of \citet{HOCKING-penalties}, with a single feature $x_i = \log \log N_i$ for each data sequence $i$. To make a prediction $\log \lambda_i  = f(x_i) = w^T x_i + b$ we first learn the weight $w$ and bias $b$ using convex optimization of a squared hinge loss which approximates the train label error (supervised; 2 learned  parameters). The feature $x_i = \log \log N_i$ was chosen to facilitate comparison with the unsupervised BIC penalty, which corresponds to always using $w=1,b=0$ in this model.
\end{description}
We used each predicted penalty value with both OPART and LOPART, then analyzed the test accuracy (Figure~\ref{fig:cv-BIC}) and area under the ROC curve (Figure~\ref{fig:cv-BIC-roc}).

\begin{figure}
    \centering
    \includegraphics[width=\textwidth]{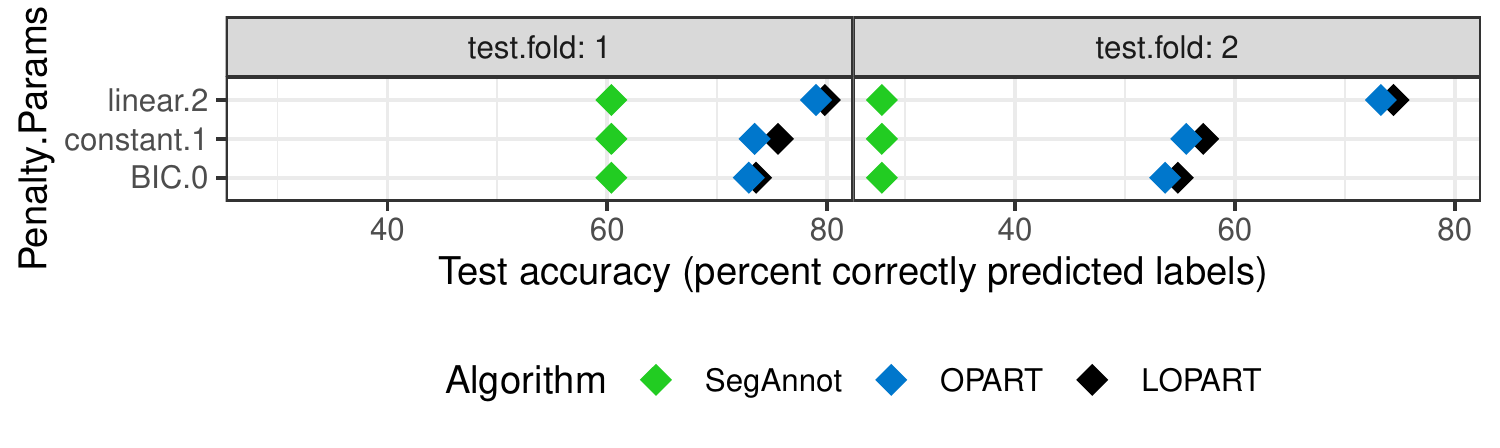}
    \vskip -0.5cm
    \caption{Comparing LOPART with baselines in terms of test accuracy using predicted penalties in real genomic data. Using both unsupervised (BIC.0) and supervised (constant.1, linear.2) penalty prediction methods, LOPART is slightly more accurate than OPART, and much more accurate than SegAnnot (in both test sets).
    }
    \label{fig:cv-BIC}
\end{figure}

\begin{figure}
    \centering
    \includegraphics[width=0.7\textwidth]{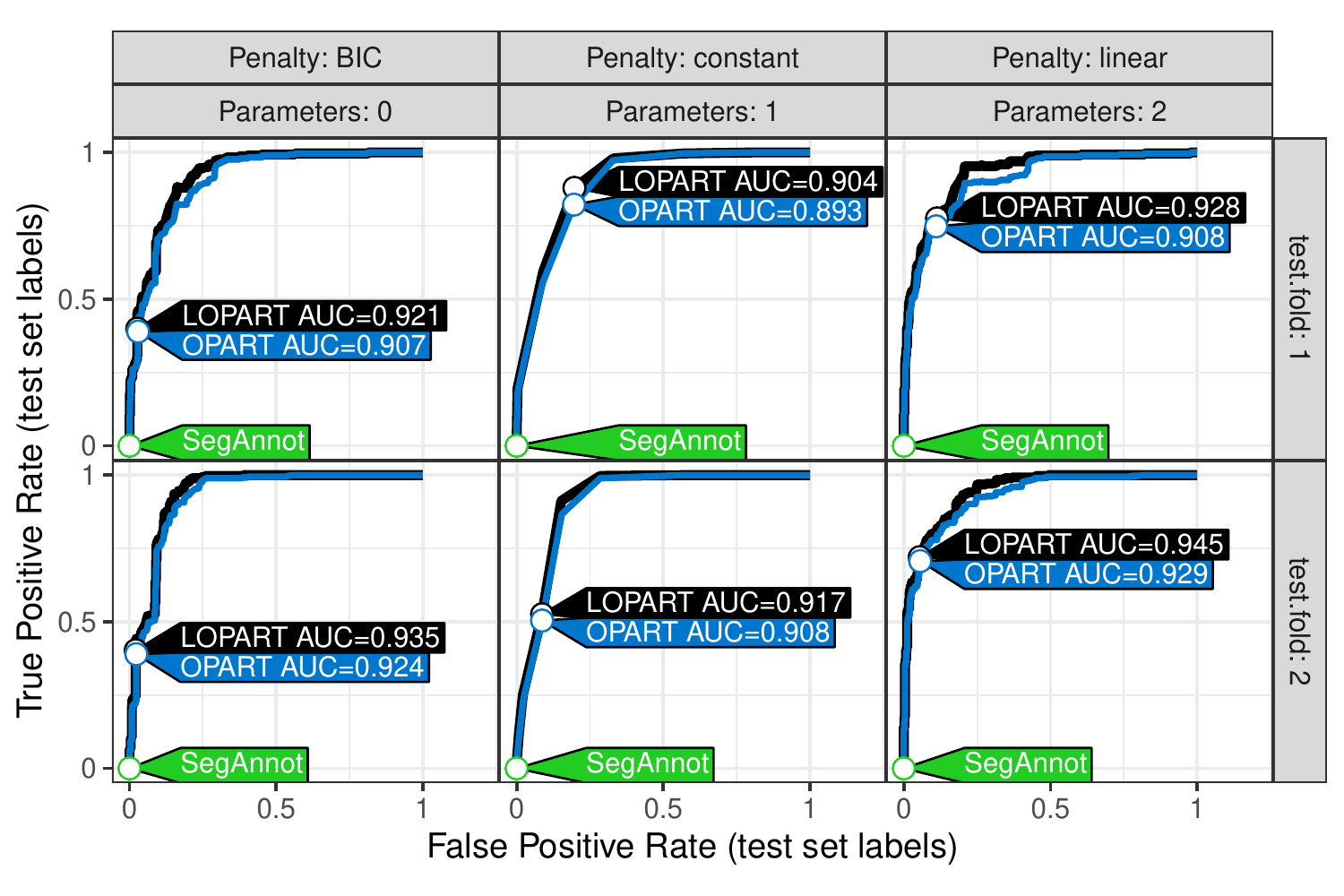}
    \vskip -0.5cm
    \caption{Receiver Operating Characteristic (ROC) analysis of penalty predictions using three methods (panels from left to right) in cross-validation using two folds (panels from top to bottom) in real genomic data. LOPART has consistently larger Area Under the Curve (AUC) than OPART. No ROC curve drawn for SegAnnot because it has no penalty/regularization parameter (never predicts any changepoints in unlabeled test data).
    }
    \label{fig:cv-BIC-roc}
\end{figure}

\paragraph{Predicted penalty comparison with OPART/FPOP.} 
We expected that penalties learned using OPART should result in reasonable predictions using LOPART, because the two algorithms use the penalty in the same way (penalty $\lambda$ added to cost for each changepoint).
Surprisingly, we observed that LOPART is slightly but consistently more accurate than OPART (Figure~\ref{fig:cv-BIC}), with differences of 0.6--2.1\% across the three penalty prediction methods and two test folds.
The ROC analysis also indicates that LOPART is slightly more accurate than OPART (Figure~\ref{fig:cv-BIC-roc}). Over the two test folds and three penalty prediction methods we observed that LOPART had 0.009--0.02 larger AUC than OPART.
These data indicate that OPART/FPOP can be used to learn a penalty for predition with LOPART, and that LOPART has slightly more accurate predictions than OPART/FPOP with the learned penalty.

\paragraph{Predicted penalty comparison with SegAnnot.} 
We expected LOPART with learned penalties to be more accurate than SegAnnot, for the same reasons as in the best penalty comparison (SegAnnot never predicts any changes in unlabeled regions so always has 100\% false negative rate).
In agreement with this expectation, we observed that LOPART has consistently much larger test accuracy rates than SegAnnot (Figure~\ref{fig:cv-BIC}).
Over the two test folds and three penalty prediction methods, we observed improvements of 13--47\% accuracy.
In the ROC analysis, SegAnnot is a single point at TPR=FPR=0\% (Figure~\ref{fig:cv-BIC-roc}).
Overall this analysis indicates that LOPART yields consistently more accurate predictions than SegAnnot in real genomic data. 

\section{Discussion and Conclusions}

We proposed a new algorithm, LOPART, for changepoint detection in a partially labeled sequence of $N$ data.
It combines ideas from Optimal Partitioning \citep{Jackson2005} with SegAnnot \citep{Hocking2014}, which is the only previous changepoint detection algorithm that guarantees consistency with the given labels, but does not predict any changepoints in unlabeled/test regions.
The novelty of LOPART with respect to SegAnnot is the penalized formulation, in which SegAnnot can be viewed as the special case with infinite penalty; decreasing the penalty results in increasing the number of predicted changepoints in unlabeled/test regions.

Our theoretical result proves that that LOPART dynamic programming update rule computes an optimal solution subject to the label constraints in $O(N)$ time in the best case, and $O(N^2)$ in the worst case.
Our empirical timings in simulated data showed that LOPART runs faster with more labels, and actually runs in log-linear $O(N\log N)$ time when the number of positive labels is $O(N)$.
Our empirical accuracy analysis using best penalties in real genomic data showed that LOPART is always at least as accurate as the OPART/FPOP and SegAnnot baselines, and LOPART is often more accurate.
Finally our predicted penalty analysis demonstrated the feasibility of learning a penalty using OPART/FPOP and then using it for prediction using LOPART.
Surprisingly, we observed that LOPART is slightly more accurate than OPART/FPOP, and much more accurate than SegAnnot (using either unsupervised penalties or supervised penalties learned with OPART/FPOP).
These advantages suggest that when a user requires a model that is consistent with the given labels, LOPART should be used rather than SegAnnot.
FPOP may be preferred for its empirical log-linear $O(N\log N)$ complexity when there are many data $N$ and few labels $M$ (although it may have some train label errors).

For future work, we would like to solve the same problem with label constraints~(\ref{eq:labeled_problem_cost}) using inequality pruning \citep{pelt} or functional pruning \citep{Maidstone2016}, which we expect would be faster (log-linear rather than quadratic, even with few labels). 
Furthermore, we could use functional pruning to solve more complex problems with different kinds of labels \citep{HOCKING2016-chipseq} and additional constraints on the directions of changes \citep{Hocking2017}.

\paragraph{Reproducible research statement.}
Our C code that implements LOPART is in a free/open-source R package on GitHub (\url{https://github.com/tdhock/LOPART}).
We also have created a dedicated GitHub repository with the code and data necessary to reproduce our figures and empirical results (\url{https://github.com/tdhock/LOPART-paper}).

\bibliographystyle{abbrvnat}
\bibliography{refs}

\end{document}